\def\eqref#1{equation~\ref{#1}}
\def\1{\bm{1}}
\def\vzero{{\bm{0}}}
\def\vone{{\bm{1}}}
\def\vmu{{\bm{\mu}}}
\def\va{{\bm{a}}}
\def\vo{{\bm{o}}}
\def\vw{{\bm{w}}}
\def\vx{{\bm{x}}}
\def\mA{{\bm{A}}}
\def\mO{{\bm{O}}}
\def\mX{{\bm{X}}}
\def\mZ{{\bm{Z}}}
\DeclareMathAlphabet{\mathsfit}{\encodingdefault}{\sfdefault}{m}{sl}
\SetMathAlphabet{\mathsfit}{bold}{\encodingdefault}{\sfdefault}{bx}{n}
\def\gA{{\mathcal{A}}}
\def\gD{{\mathcal{D}}}
\def\gF{{\mathcal{F}}}
\def\gL{{\mathcal{L}}}
\def\gM{{\mathcal{M}}}
\def\gP{{\mathcal{P}}}
\def\gR{{\mathcal{R}}}
\def\gW{{\mathcal{W}}}
\def\gX{{\mathcal{X}}}
\def\gY{{\mathcal{Y}}}
\def\sD{{\mathbb{D}}}
\def\sP{{\mathbb{P}}}
\newcommand{\E}{\mathbb{E}}
\newcommand{\R}{\mathbb{R}}
\DeclareMathOperator*{\argmin}{arg\,min}
\newcommand*\diff{\mathop{}\!\mathrm{d}}
\newcommand*\PM{$\text{PM}_{2.5}$\xspace}
\newcommand*\tresnet{{\textsc{tresnet}}\xspace}
\newcommand*\tresnetpl{{\textsc{tresnet}$_\textsc{pl}$}\xspace}
\newcommand*\tresnetvc{{\textsc{tresnet}$_\textsc{vc}$}\xspace}
\newcommand*\mug{$\mathrm{\upmu g/m^3}$\xspace}
\newcommand*{\Rad}{\mathrm{Rad}_n}
\newcommand*{\nn}{^\text{NN}}
\newcommand*{\tr}{^\text{tr}}
\newcommand*{\vpsi}{{\bm \psi}}
\newcommand*{\vvarphi}{{\bm \varphi}}
\newcommand*{\vepsilon}{{\bm \epsilon}}
\newcommand{\indep}{\perp \!\!\!\! \perp}
\newcommand{\rev}[1]{{#1}}
\declaretheorem[name=Proposition]{proposition}
\declaretheorem[name=Assumption,numberwithin=section]{assumption}
  \providecommand\BibTeX{{%
    \normalfont B\kern-0.5em{\scshape i\kern-0.25em b}\kern-0.8em\TeX}}}
\begin{document}

\title[Causal Estimation of Exposure Shifts with Neural Networks]{Causal Estimation of Exposure Shifts with Neural Networks}

\author{Mauricio Tec}
\email{mauriciogtec@hsph.harvard.edu}
\orcid{0000-0002-1853-5842}
\affiliation{%
  \institution{Harvard University}
  \city{Cambridge}
  \state{MA}
  \country{USA}
}

\author{Kevin Josey}
\orcid{0000-0003-2490-6272}
\authornote{Work conducted while at Harvard University}
\email{kevin.josey@cuanschutz.edu}
\affiliation{%
  \institution{Colorado School of Public Health}
  \city{Aurora}
  \state{CO}
  \country{USA}
}

\author{Oladimeji Mudele}
\orcid{0000-0001-7131-6334}
\email{omudele@hsph.harvard.edu}
\affiliation{%
  \institution{Harvard University}
  \city{Cambridge}
  \state{MA}
  \country{USA}
}

\author{Francesca Dominici}
\orcid{0000-0002-9382-0141}
\email{fdominic@hsph.harvard.edu}
\affiliation{%
  \institution{Harvard University}
  \city{Cambridge}
  \state{MA}
  \country{USA}
}

\begin{abstract}
A fundamental task in causal inference is estimating the effect of a \emph{distribution} shift in the treatment variable. We refer to this problem as \emph{shift-response function} (SRF) estimation. Existing neural network methods for causal inference lack theoretical guarantees and practical implementations for SRF estimation. In this paper, we introduce \textbf{T}argeted \textbf{R}egularization for \textbf{E}xposure \textbf{S}hifts with Neural \textbf{Net}works (\tresnet), a method to estimate SRFs with robustness and efficiency guarantees. Our contributions are twofold. First, we propose a targeted regularization loss for neural networks with theoretical properties that ensure double robustness and asymptotic efficiency specific to SRF estimation. Second, we extend targeted regularization to support loss functions from the exponential family to accommodate non-continuous outcome distributions (e.g., discrete counts). We conduct benchmark experiments demonstrating \tresnet 's broad applicability and competitiveness. We then apply our method to a key policy question in public health to estimate the causal effect of revising the US National Ambient Air Quality Standards (NAAQS) for $\text{PM}_{2.5}$ from 12 $\mu g/m^3$ to 9 $\mu g/m^3$. This change has been recently proposed by the US Environmental Protection Agency (EPA). Our goal is to estimate the reduction in deaths that would result from this anticipated revision using data consisting of 68 million individuals across the U.S.\footnote{The code is available at \url{https://github.com/NSAPH-Projects/tresnet}}
\end{abstract}

\maketitle

\section{Introduction}

The field of causal inference has seen immense progress in the past couple of decades with the development of \emph{targeted} doubly-robust methods yielding desirable theoretical efficiency guarantees on estimates for various causal effects \citep{van2011targeted,kennedy2016semiparametric}. These advancements have recently been incorporated into the neural network (NN) literature for causal inference via \emph{targeted regularization} (TR) \citep{shi2019adapting, nie2021vcnet}. TR methods produce favorable properties for causal estimation by incorporating a regularization term into a supervised neural network model. However, it remains an open task to develop a NN method that specifically targets the causal effect of a shift in the distribution for a continuous exposure/treatment variable \citep{munoz2012population}. We call this problem \emph{shift-response function} (SRF) estimation. Many scientific questions can be formulated as an SRF estimation task \citep{munoz2012population}. Some notable examples in the literature include estimating the health effects from shifts to the distribution of environmental, socioeconomic, and behavioral variables (e.g., air pollution, income, exercise habits) \citep{munoz2012population, diaz2020causal, smith2023application}.

One of our objectives is to develop a neural network technique that addresses a timely and highly prominent regulatory question. Specifically, the EPA is currently considering whether or not to revise the National Ambient Air Quality Standards (NAAQS), potentially lowering the acceptable annual average \PM concentration from 12 to 11, 10, or 9 \mug. We anticipate that the revision to the NAAQS will ultimately result in a shift to the distribution of \PM concentrations. Our goal is to estimate, for the first time, the reduction in deaths that would result from this anticipated shift using causal methods for SRFs.

\begin{figure}[tbp]
    \centering
    \vskip -4pt
    \includegraphics[width=0.75\linewidth]{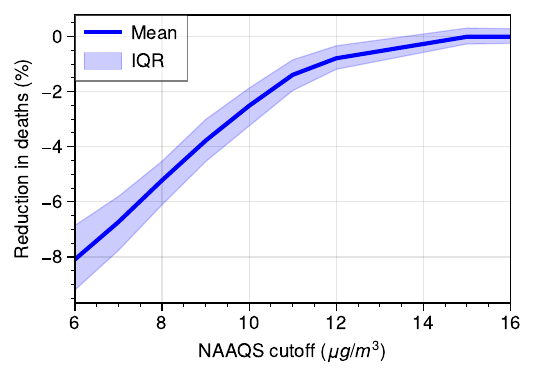}
    \vskip -10pt
    \caption{Estimated mortality reduction under a \emph{cutoff exposure shift} lowering the annual \PM in all regions below a given threshold. Uncertainty bands represent the interquartile range from an ensemble of networks. \emph{Data source}: US Medicare claims from 2000--2016.
    }
    \label{fig:results-cutoff}
    \vskip -6pt
\end{figure}

\emph{Contributions}.\quad We develop a novel method, called \textbf{T}argeted \textbf{R}egularization for \textbf{E}xposure \textbf{S}hifts with Neural \textbf{Net}works (\tresnet), which introduces two necessary and generalizable methodological innovations to the TR literature. First, we use a TR loss targeting SRFs, ensuring that our estimates retain the properties expected from TR methods such as asymptotic efficiency and double robustness \citep{kennedy2016semiparametric}. Given standard regularity conditions, these properties guarantee that the SRF is consistently estimated when either the outcome model or a density-ratio model for the exposure shift is correctly specified, achieving the best possible efficiency rate when both models are consistently estimated. Second, \tresnet accommodates non-continuous outcomes belonging to the exponential family of distributions (such as mortality counts) that frequently arise in real-world scenarios, including our motivating application. In addition to its suitability for our application, we evaluate the performance of \tresnet in a simulation study tailored for SRF estimation, demonstrating improvements over neural network methods not specifically designed for SRFs.

Our results contribute to the public debate informing the US Environmental Protection Agency (EPA) on the effects of modifying air quality standards. A preview of the results (fully developed in \cref{sec:application}) is presented in \cref{fig:results-cutoff}. The figure presents the estimated reduction in deaths (\%) resulting from various shifts to the distribution of \PM across every ZIP-code in the contiguous US between 2000 and 2016. These shifts limit the maximum concentration of \PM to the cutoff value for every ZIP-code that exceeds the cutoff, and otherwise leaves ZIP-codes that do not exceed the cutoff unchanged. We vary the cutoff in this SRF between 6 \mug and  16 \mug (x-axis). The y-axis represents the \% reduction in deaths that corresponds with each cutoff threshold. Notably, a NAAQS threshold of 9 \mug, would have had the effect of decreasing elder mortality by 4\%. These findings present a data-driven perspective on the potential health benefits of the EPA's proposal. 

\paragraph{Related work} 
Neural network-specific methods in causal inference can broadly be categorized into those focusing on estimating individualized effects (e.g., \cite{bica2020estimating, yoon2018ganite}) and those aimed at understanding marginal effects from a population. Various articles in the latter category -- to which our work belongs -- use semiparametric theory to derive theoretical guarantees tied to a specific target causal estimand. Such guarantees are generally focused on double robustness and asymptotic efficiency \cite{kennedy2016semiparametric, bang2005doubly,robins2000robust,bickel1993efficient}. The \emph{efficient influence function} (EIF) plays a pivotal role in this domain by characterizing the best possible asymptotic efficiency of estimators. The EIF is also recognized as Neyman orthogonal scores within the double machine learning literature \cite{kennedy2022semiparametric}.

{Targeted regularization} (TR) has emerged as a tool to incorporate EIF-based methods within deep learning. Notably, the \textsc{dragonnet} \cite{shi2019adapting} introduced TR in the context of binary treatment effects, demonstrating its potential for causal inference. Subsequently, the \textsc{vcnet} \citep{nie2021vcnet} extended the application of TR to the estimation of exposure-response functions (ERFs), showcasing its utility in estimating dose-response curves for continuous treatments and highlighting the adaptability of TR in addressing complex causal questions. Our work builds on the TR literature recognizing its unexplored potential for SRF estimation. Indeed, EIF methods have been applied to scenarios closely related to SRFs outside the neural-network literature under the stochastic intervention and modified treatment policy frameworks \cite{munoz2012population, diaz2021nonparametric}. However, their direct integration with targeted regularization has not been previously undertaken in the literature -- a gap which we address in this work.

Relating to the application, recent studies have investigated the causal effects of air pollution on mortality using ERFs \cite{wu2020evaluating,bahadori2022end,josey2023air}. Although these methods inform policy regarding the marginal effects of air pollution on mortality, they cannot adequately address questions suited for SRFs, which are crucial for analyzing the impact of air quality regulation changes. The distinction between ERFs and SRFs, and the significance of SRFs for our application and policy implications, are elaborated in \cref{sec:problem,sec:application}.

\section{Problem Statement: The Causal Effect of an Exposure Shift}\label{sec:problem}

We let $(A, Y, \mX)$ denote a unit from the target population, where $A \in \gA$ is a continuous exposure variable (also known as the treatment), $Y \in \gY$ is the outcome of interest, and $\mX \in \gX$ are covariates. We assume a sample of size $n$ and denote it as $\{\mX_i, A_i, Y_i\}_{i=1}^n$. For instance, in the application of interest, which is described in detail in \cref{sec:application}, $i$ represents a zip code location in a given year, $A_i$ represents its annual average concentration of \PM ($\mu g/m^3$), $Y_i$ is the number of deaths, and $\mX_i$ includes demographic and socioeconomic variables. For simplicity, we will omit the subscript $i$ unless required for clarity or when describing  statistical estimators from samples.

We assume the following general non-parametric structural causal model (SCM)\cite{pearl2009causality}:
\begin{equation}\label{eq:scm}
\begin{aligned}
    Y &= f_Y(\mX, A, U_Y),\\
    A &= f_A(\mX, U_A), \\
    \mX &= f_\mX(U_\mX),
\end{aligned}
\end{equation}
where $U_Y, U_A, U_\mX$ are exogenous variables. The functions $f_Y, f_A, f_\mX$ are deterministic and unknown. We will avoid measure-theoretic formulations to keep the presentation simple.

The quantities $Y^a=f_Y(\mX, a, U_Y)$ are known as potential outcomes. Potential outcomes can be divided into two parts. The factual outcome corresponds with the observed outcome when $a = A$. The counterfactual outcomes are hypothetical results at any point $a \in \gA$ where $a \neq A$. We will typically refer to $Y^a$ as the counterfactual, as SRFs are tooled for predicting the potential outcome from unobserved exposures (i.e. the counterfactuals), unless specifically stated otherwise.

There are two functions that are pivotal in defining and estimating causal effects. First, the \emph{outcome function} represents the expected (counter)factual outcome conditional on covariates, expressed as
$$
\mu(\vx, a) := \E[Y|\mX=x, do(A=a)] = \E[Y^a| \mX=\vx].
$$
Unless otherwise specified, all expectations are taken from the data distribution in \cref{eq:scm}. The second key function, the (generalized) \emph{propensity score}, is the distribution of the exposure conditional on the covariates, denoted as $p(a|\vx):=p(A=a|\mX=\vx)$. Note that $p$ is also used generically to represent any density function with the given arguments; the propensity score being a special case.

\begin{figure*}[tbp]
  \centering
  \begin{subfigure}[t]{0.3\textwidth}
    \includegraphics[width=1\linewidth]{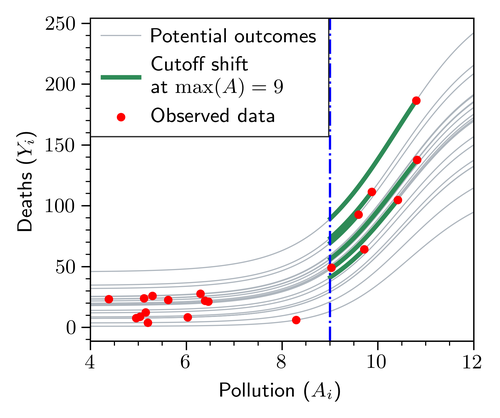}
    \vskip - 8pt
    \caption{Cutoff shift}
    \label{fig:toy:cutoff}
    \end{subfigure}%
  \begin{subfigure}[t]{0.3\textwidth}
    \includegraphics[width=1\linewidth]{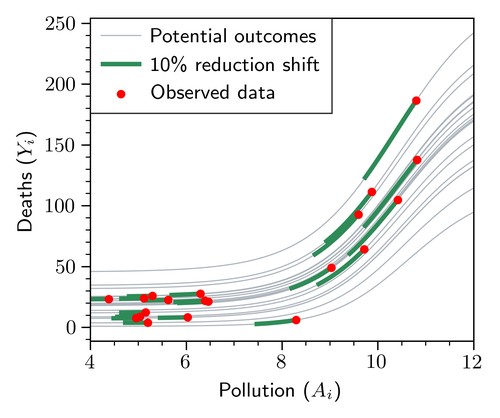}
    \vskip - 8pt
    \caption{Percent reduction shift}
    \label{fig:toy:percent}
    \end{subfigure}%
  \begin{subfigure}[t]{0.3\textwidth}
    \includegraphics[width=1\linewidth]{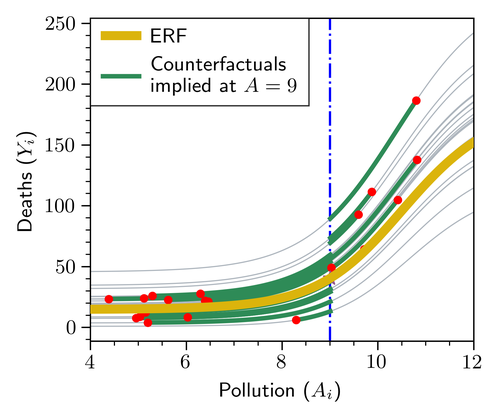}
    \vskip - 8pt
    \caption{ERF}
    \label{fig:toy:erf}
    \end{subfigure}%
  \caption{Two examples of exposure shifts with their implied counterfactuals and, for comparison, the implied counterfactuals of an exposure-response function at a given treatment value.}
  \label{fig:toy}
  \vskip -4pt
\end{figure*}

\paragraph{Exposure shifts} An exposure shift alters a unit's exposure from its observed value $A$ to a modified value $\tilde{A}$. To help build some intuition, one example of an exposure shift is a \emph{cutoff shift}, defined as $\tilde{A} = \min\{A, c\}$, which caps the exposure to a maximum threshold $c\in\R$. Another example is a percent reduction shift $\tilde{A}= cA$ in which, for instance, a value of $c=0.9$ would signify a $10\%$ reduction to the observed exposures. These shifts are depicted in \cref{fig:toy}. As in our application of interest, developed in detail in \cref{sec:application}, $\tilde{A}$ represents the (hypothetical) annual exposure to \PM that would have occurred in a particular zip code and year if an alternate policy (relative to the current NAAQS) had been implemented.

Formally, we define an exposure shift as a counterfactual exposure distribution $\tilde{A}\sim \tilde{p}(\tilde{A} | \mX)$, replacing the propensity score in the observed data distribution. We do not assume any advanced knowledge of the shifted distribution $\tilde{p}$ nor the analytical form of the shift. Instead, we can observe samples from the shifted distribution, realized as pairs of unshifted-shifted exposures $(A, \tilde{A})$. An alternative representation is that of a \emph{modified treatment policy} \cite{diaz2021nonparametric}, where the exposure shift can arise from a (possibly unknown) stochastic or deterministic transform -- $\tilde{A} = d(A, \mX)$. 

Importantly, while $A$ and $\tilde{A}$ may seem like different objects due to the notation, they in fact refer to the same data element -- the exposure or treatment. The tilde notation ``$\sim$" is helpful to emphasize whether it's being sampled from the shifted or unshifted distributions. 

\paragraph{The estimand of interest: the SRF}
The quantity of interest $\psi$ is the expected counterfactual outcome induced by the exposure shift after replacing $p(A|\mX)$ with $\tilde{p}(\tilde{A}|\mX)$:
\begin{equation}\label{eq:estimand}
\psi :=  {\E\left[Y^{\tilde{A}}\right]} = \E_{\mX \sim p(\mX)}\left[\E_{\tilde{A} \sim {\tilde{p}(\tilde{A}|\mX)}} \left[ {\mu(\mX, \tilde{A})} \mid \mX\right]\right].
\end{equation}
The SRF is a key estimand in the context of exposure shifts, as it allows us to estimate the effect of a policy-relevant shift in the distribution of a continuous exposure. 

Under some regularity conditions, established later in this section, the estimand can be rewritten using the importance sampling formula as 
\begin{equation}\label{eq:importance-sampling}
\begin{aligned}
\psi&=\E[\mu(\mX, A)w(\mX, A)]. \\
 w(\vx, a) &:= \tilde{p}(a|\vx) / p(a|\vx).
\end{aligned}
\end{equation}
We will derive estimators of $\psi$ using estimators of $\mu$ and $w$ in \cref{sec:tresnet}.

\paragraph{Comparison with traditional causal effects}
Exposure-response functions are the most common estimands in the causal inference literature for continuous exposures. Also known as a dose-response curve, an ERF can be described as the mapping $\xi(a) = \E_{\mX\sim p_{\mX}}[\mu(\mX, a)]$. 
 
One can consider ERFs as a special case of an SRF in which, for each treatment value $a\in\gA$, the quantity of interest is the average counterfactual outcome where the exposure is assigned to $a$ \emph{for all units}. That is, it can be seen as a limiting case of an SRF when $\tilde{p}$ is a degenerate point-mass distribution at $a$ for all units. This observation is the primary reason why SRFs are better suited for our motivating application as ERFs do not allow us to consider scenarios where each unit is given a different exposure value shifted from its observed value. A visual example is shown in \cref{fig:toy:erf}. At each point of the curve, an ERF describes the average outcome when \emph{all} units experience the \emph{same} \PM value. By contrast, SRFs allow us to estimate the average outcome when \emph{each} unit experiences a \emph{different} \PM value, as would occur under the NAAQS revision.
 
Moreover, ERFs require extrapolating the outcome estimates to regions where the positivity condition fails \cite{imbens_rubin_2015}, that is, regions of the joint exposure-covariate space where no data is available (see \cref{fig:toy:erf}). Extrapolation to such regions is unnecessary to estimate SRFs efficiently. This insight is supported by our simulation and experiment results in \cref{sec:experiments}.
  
\paragraph{Causal identification}
The target estimand $\psi$ can be expressed as a functional of the observable data distribution under standard assumptions, which are:

\begin{assumption}[Unconfoundedness]\label{as:unconf} There are no backdoor paths from $A$ to $Y$ in the SCM in \cref{eq:scm}, implying that $A \indep Y^a \mid \mX$ for all $a\in \gA$;
\end{assumption}
\begin{assumption}[Positivity]\label{as:pos} There are constants $ M_1, M_2 > 0$ such that $M_1 \leq p(a|\vx)/\tilde{p}(a|\vx) \leq M_2$ for all $(a, \vx)$ such that $\tilde{p}(a | \vx)>0$.
\end{assumption}

The first assumption ensures that $\rev{\E[Y^a | \mX=\vx]} = \E[Y | \mX=\vx, A=a]$. This result is sometimes known as causal identification since it allows us to express the causal effect of $A$ on $Y$ as a function of the observed data. A formal proof is given in the appendix. The second assumption implies that the density ratio in \cref{eq:importance-sampling} is well-defined and behaved. Notice that the notion of positivity used here, as required by SRFs, is generally weaker than the typical positivity assumption in the standard causal inference literature requiring $M_1 \leq p(a|\vx) \leq M_2$ for all $(\vx, a)$ \cite{munoz2012population}. 

\paragraph{Multiple shifts}
Under the framework outlined so far, it is possible to estimate the effect of multiple exposure shifts simultaneously. We simply let $\tilde{p} \in \tilde{\gP}$ denote the set of finite exposure shifts of interest and adopt the vectorized notation $\vw=(w_{\tilde{p}})_{\tilde{p}\in \tilde{\gP}}$, $\vpsi=(\psi_{\tilde{p}})_{\tilde{p}\in \tilde{\gP}}$, and $\tilde{\mA} = (\tilde{A}^{\tilde{p}})_{\tilde{p} \in \tilde{\gP}}$.

\section{Estimating SRFs with Targeted Regularization for Neural Nets}\label{sec:tresnet}

As we have suggested earlier, an estimator of the SRF can be derived from estimators of the outcome and density ratio functions. Using TR, we will obtain an estimator $\hat{\vpsi}\tr$ that ``converges efficiently'' to the true $\vpsi$ at a rate in accordance with the prevailing semiparametric efficiency theory surrounding robust causal effect estimation \citep{kennedy2022semiparametric}. 

\paragraph{Additional notation} To describe the necessary theoretical results based on semiparametric theory, we require some additional notation. Given a family of functions $f\in\gF$, denote $\lVert \gF\rVert_\infty=\sup_{f\in\gF}\lVert f\rVert_\infty$. The sample Rademacher complexity is denoted as $\Rad(\gF)=\sup_{f\in\gF}|\frac{1}{n}\sum_{i=1}^n \sigma_i f(U_i)|$ where $\sigma_i$ are independent and identically distributed Rademacher random variables satisfying $p(\sigma_i=1)=p(\sigma_i=-1)=1/2$. The Rademacher complexity measures the ``degrees of freedom" of an estimating class of functions. We use $O_p$ and $o_p$ to denote stochastic boundedness and convergence in probability, respectively.  Given a random variable $U \sim \sD$, denote $\sD (f) := \E_{U\sim \sD}[f(U)]$.  We also let $\sD_n$ denote the empirical distribution of $\sD$ given an iid sample $\{U_i\}_{i=1}^n$. It follows that $\sD_n(f)=\frac{1}{n}\sum_{i=1}^nf(U_i)$. Lastly, let $\mO=(\mX, A, \tilde{\mA}, Y) \sim \sP$ denote an element of the augmented data generating process.

\emph{The efficient influence function (EIF)}\quad The EIF is a fundamental function in semiparametric theory \citep{kennedy2022semiparametric}. It is denoted as the function $\vvarphi$ for an observation $\mO$. The EIF characterizes the gradient of the target estimand with respect to a small perturbation in the data distribution. The form of the EIF is specific to the target estimand. For the SRF in \cref{eq:estimand}, the EIF is given by
 \begin{equation}\label{eq:eif-def}
  \begin{aligned}
  \vvarphi(\mO; \vpsi, \mu, \vw) := \vw(\mX, A)\left(Y-\mu(\mX, A)\right) + \mu(\mX, \tilde{\mA}) - \vpsi.
  \end{aligned}
  \end{equation}
We provide a formal derivation in the appendix. The reader can refer to \citet{tsiatis2006semiparametric} and \citet{kennedy2022semiparametric} for a more comprehensive introduction to the nature and utility of influence functions. The importance of EIFs for causal estimation relies on the following two properties. First, the best possible variance among the family of regular, asymptotically linear estimators of $\vpsi$ is bounded below by $\sP(\vvarphi\vvarphi^\top)$. This lower bound is known as the efficiency rate. Second, the EIF can be constructed as a function of $(\vpsi, \mu, \vw)$. As it turns out, if a tuple of estimators $(\hat{\vpsi}, \hat{\mu}, \hat{\vw})$ satisfies the empirical estimating equation (EEE) given by $\sP_n(\vvarphi(\hat{\vpsi}, \hat{\mu}, \hat{\vw}))=0$, then $\hat{\vpsi}$ achieves the efficiency rate asymptotically.

The following observation offers an interpretation to the EEE. If $(\hat{\vpsi}, \hat{\mu}, \hat{\vw})$ satisfy the EEE, then $\hat{\vpsi}$ can be decomposed in terms of a debiasing component of the residual error and a plugin estimator for the marginalized average of the mean response:
\begin{equation}\label{eq:debiasing}
  \hat{\vpsi} = {\color{blue}\underbrace{\textstyle{\frac{1}{n}\sum_{i=1}^n\hat{\vw}(\mX_i, A_i)(Y_i-\hat{\mu}(\mX_i, A_i))}}_\text{debiasing term}} + {\color{red}\underbrace{\textstyle{\frac{1}{n}\sum_{i=1}^n \hat{\mu}(\mX_i, \tilde{\mA}_i)}}_\text{\color{red} plugin estimator}}
\end{equation}
This heuristic is crucial for the development of the TR loss in the next section. 

\subsection{Targeted Regularization for SRFs}\label{sec:tr-srf}

An immediate approach to obtain a doubly-robust estimator satisfying the EEE would be to use the right-hand side of \cref{eq:debiasing} as a definition of an estimator given nuisance function estimators $\hat{\mu}$ and $\hat{\vw}$. Such an estimator is called the \emph{augmented inverse-probability weighting} (AIPW) estimator for exposure shifts in analogy to the AIPW estimators for traditional average causal effects (ATE and ERF)~\citep{robins2000robust, robins2000profile}. However, it's been noted in the causal inference literature that, empirically, relying on the debiasing term of \cref{eq:debiasing} causes AIPW-type estimators to have suboptimal performance in finite samples \cite{shi2019adapting,van2011targeted}. Instead, targeted learning seeks an estimator that, by construction, satisfies the EEE without requiring the debiasing term. TR achieves this goal by learning a perturbed outcome model $\tilde{\mu}$ using a special regularization loss. 

\emph{Generalized TR for outcomes in the exponential family}.\quad 
We will derive the TR loss for SRFs while also extending the TR framework to accommodate non-continuous outcomes from the exponential family of distributions.

First, we say that the outcome follows a conditional distribution from the exponential family if $p(Y | \mX, A) \propto \exp(Y\eta(\mX, A) - \Lambda(\eta(\mX, A))$ for some function $\eta: \gX \times \gA \to \R$. The family's canonical link function $g$ is defined by the identity $g(\E[Y\mid \mX, A])=\eta(\mX, A)$.  For all distributions in the exponential family, $g$ is invertible \citep{mccullagh2019generalized}. Exponential families allow us to consider the usual mean-squared error and logistic regression as special cases. They also enable modeling count outcomes as in our motivating application. In this setting, we set $\Lambda(\eta)=e^\eta$ and $g(\mu)=\log(\mu)$ -- the canonical environment for Poisson regression. The following theorem forms the basis for the TR estimator. 

\begin{restatable}{theorem}{thmtwo}  
  \label{thm:two}
  Let $\vepsilon$ denote a perturbation parameter and define
  \begin{equation}\label{eq:glm-loss}
  \begin{aligned}
  & \gL\tr(\mu\nn, \vw\nn, \vepsilon)(\mO) = \\
  & \hspace{0.5in} \Lambda(g(\mu\nn(\mX, A)) + \vepsilon) - (g(\mu\nn(\mX, A)) + \vepsilon)Y. \\
  & \gR\tr(\mu\nn, \vw\nn, \vepsilon) =  \textstyle{\sP_n(\gL\tr(\mu\nn, \vw\nn, \vepsilon))}.
  \end{aligned}
  \end{equation}
   Then $(\frac{\partial\gR\tr}{\partial\vepsilon})({\mu}\nn, {\vw}\nn, {\vepsilon})=0$ if and only if
   $$\textstyle{\frac{1}{n}\sum_{i=1}^n{\vw}\nn(\mX_i, A_i)(Y_i-g^{-1}(g({\mu}\nn(\mX_i, A_i)) + {\vepsilon}))) = 0.}$$
  \end{restatable}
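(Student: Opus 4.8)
The plan is to compute $\partial\gR\tr/\partial\vepsilon$ in closed form and observe that it equals, up to an overall sign, the weighted residual sum in the claim; the asserted ``iff'' is then just the statement that a quantity and its negative vanish together. Because $\gR\tr$ is a finite average and each summand $\gL\tr(\mu\nn,\vw\nn,\vepsilon)(\mO_i)$ is smooth in $\vepsilon$ (for $g(\mu\nn(\mX_i,A_i)) + \vepsilon$ in the interior of the natural parameter space, where $\Lambda$ is differentiable---automatic in the Poisson case $\Lambda(\eta)=e^\eta$ of our application), the derivative passes inside the sum and it suffices to differentiate one term. Reading the loss with the weight $\vw\nn(\mX,A)$ as a multiplicative prefactor (as the form of the claimed estimating equation dictates), and treating $\vw\nn(\mX_i,A_i)$ and $g(\mu\nn(\mX_i,A_i))$ as constants in $\vepsilon$, the only $\vepsilon$-dependence is through $\Lambda(g(\mu\nn(\mX_i,A_i)) + \vepsilon)$ and the linear term $-(g(\mu\nn(\mX_i,A_i)) + \vepsilon)Y_i$, so
\[
\frac{\partial}{\partial\vepsilon}\gL\tr(\mu\nn,\vw\nn,\vepsilon)(\mO_i) = \vw\nn(\mX_i,A_i)\bigl[\Lambda'\bigl(g(\mu\nn(\mX_i,A_i)) + \vepsilon\bigr) - Y_i\bigr].
\]

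The crux is the exponential-family identity $\Lambda' = g^{-1}$. Starting from $p(Y\mid\mX,A)\propto\exp(Y\eta - \Lambda(\eta))$ with $\eta=\eta(\mX,A)$, differentiating the normalization condition in $\eta$ yields the standard fact that the derivative of the log-partition function is the conditional mean, $\Lambda'(\eta)=\E[Y\mid\mX,A]$. The canonical link is defined by $g(\E[Y\mid\mX,A])=\eta(\mX,A)$ and is invertible on the exponential family \citep{mccullagh2019generalized}, so $\E[Y\mid\mX,A]=g^{-1}(\eta)$ and therefore $\Lambda'(\eta)=g^{-1}(\eta)$ for every admissible $\eta$. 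Applying this with $\eta = g(\mu\nn(\mX_i,A_i)) + \vepsilon$ turns the derivative above into $\vw\nn(\mX_i,A_i)\bigl(g^{-1}(g(\mu\nn(\mX_i,A_i)) + \vepsilon) - Y_i\bigr)$.

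Summing over $i$ and dividing by $n$ gives
\[
\frac{\partial\gR\tr}{\partial\vepsilon}(\mu\nn,\vw\nn,\vepsilon) = -\frac{1}{n}\sum_{i=1}^n \vw\nn(\mX_i,A_i)\bigl(Y_i - g^{-1}\bigl(g(\mu\nn(\mX_i,A_i)) + \vepsilon\bigr)\bigr),
\]
so the left-hand side is zero exactly when the right-hand sum is, which is the claimed equivalence. For multiple shifts one repeats the computation coordinate-wise: $\vepsilon$, $\vw\nn$, and the perturbed model are indexed by $\tilde p\in\tilde\gP$, and $\partial\gR\tr/\partial\vepsilon = \vzero$ is the conjunction of the per-coordinate equations. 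I expect the only nontrivial step to be the clean statement and justification of $\Lambda'=g^{-1}$ (derivative of the log-partition equals the mean, combined with the definition and invertibility of the canonical link); everything else is interchanging a finite sum with a derivative and elementary differentiation.
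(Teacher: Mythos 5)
Your computation is correct, and its core---differentiate under the finite sum and apply the exponential-family identity $\Lambda'=g^{-1}$---is the same as the paper's; in fact you justify that identity more carefully than the paper, whose proof states it in the garbled form $\Lambda'(\eta)=g^{-1}(\E[Y\mid\eta])$. The one genuine difference is how $\vw\nn$ enters the derivative, and here you and the paper resolve an ambiguity in the displayed loss (which, as written, contains no $\vw\nn$ at all) in two different standard ways. You read $\gL\tr$ as a $\vw\nn$-weighted likelihood, so the weight falls out as a multiplicative prefactor of the score; the paper instead treats the fluctuation as a ``clever covariate'' submodel, asserting $\frac{d}{d\vepsilon}g(\tilde{\mu}\nn(\mX,A))=\vw\nn(\mX,A)$, i.e.\ implicitly taking $g(\tilde{\mu}\nn)=g(\mu\nn)+\vepsilon\,\vw\nn$ so that the chain rule produces the weight. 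These two fluctuation strategies yield slightly different estimating equations in general: the covariate version places $\vepsilon\,\vw\nn(\mX_i,A_i)$ rather than $\vepsilon$ inside $g^{-1}$, which does not match the equation claimed in the theorem, whereas your weighted reading reproduces the claimed equation exactly; the paper's proof reaches the stated form only by simultaneously writing $\tilde{\mu}=g^{-1}(g(\hat{\mu})+\hat{\vepsilon})$, which is inconsistent with its own chain-rule step. So your route is arguably the one under which the theorem as stated is literally true. Either way the substance---$\Lambda'=g^{-1}$ plus elementary differentiation and the trivial sign observation for the ``iff''---is identical, and your coordinatewise treatment of multiple shifts is fine.
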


\begin{proof} A key property of an exponential family is $\Lambda'(\eta) =\frac{d}{d\eta}\Lambda(\eta)=g^{-1}(E[Y|\eta])$ \citep{mccullagh2019generalized}. Noting that $\frac{d}{d\vepsilon}g(\tilde{\mu}\nn(\mX, A))=\vw\nn(\mX, A)$ for all $\mu\nn,\vw\nn\vepsilon$, and using the chain rule, we have:

\begin{equation}
  \begin{aligned}
   \vzero &= \frac{d}{d\vepsilon} \gR\tr(\hat{\mu}, \hat{\vw}, \hat{\vepsilon}) \\
   & =\frac{1}{n}\sum_{i=1}^n\frac{\diff}{\diff\vepsilon}\Big\vert_{\vepsilon =\hat{\vepsilon}}\left\{  \Lambda(g(\tilde{\mu}(\mX, A))
   )  - Yg(\tilde{\mu}(\mX, A))\right\}\\
   &  =\frac{1}{n}\sum_{i=1}^n \left\{ g^{-1}(g(\tilde{\mu}(\mX, A)))\hat{\vw}(\mX, A) - Y \hat{\vw}(\mX, A)\right\} \\
   & = \frac{1}{n}\sum_{i=1}^n \hat{\vw}(\mX, A)(\tilde{\mu}(\mX, A) - Y).\\
  \end{aligned}
\end{equation}
The fact that $\hat{\vpsi}\tr=\frac{1}{n}\sum_{i=1}^n \tilde{\mu}(\mX_i, \tilde{A}_i)$ satisfies the empirical estimating equation follows trivially from the fact that $\sP_n\varphi(\hat{\vpsi}\tr, \tilde{\mu}, \hat{\vw})= \frac{1}{n}\sum_{i=1}^n \hat{\vw}(\mX, A)(Y - \tilde{\mu}(\mX, A)) + \frac{1}{n}\sum_{i=1}^n \tilde{\mu}(\mX_i, \tilde{A}_i) - \hat{\vpsi}\tr$. The first term is zero because of the above results while the last two terms cancel each other by definition.
\end{proof}

Notice that the condition ${\partial\gR\tr}/{\partial\vepsilon}=0$ in the theorem is achieved at the local minima of $\gR\tr$. Motivated by this observation, we next define the total TR loss as
$$
\gR(\mu\nn, \vw\nn, \vepsilon) := \gR_\mu(\mu\nn) + \alpha \gR_\vw(\vw\nn) + \beta_n\gR\tr(\mu\nn, \vw\nn, \vepsilon)
$$
where $\gR_\mu$ and $\gR_\vw$ are the empirical risk functions used to learn $\mu$ and $\vw$ (details in \cref{sec:architecture}), $\alpha>0$ is a hyperparameter, and $\beta_n$ is a regularization weight satisfying $\beta_n \to 0$. The latter condition is needed to ensure statistical consistency, as first discussed by \citet{nie2021vcnet} for ERF estimation. 

Since ${\vepsilon}$ only appears in the regularization term, the full loss in \cref{eq:total-tr-loss} preserves the result that $\frac{\partial\gR\tr}{\partial\vepsilon}=0$ upon optimization  Then, the TR estimator $\hat{\vpsi}\tr $ is defined as the solution of the optimization problem:
\begin{equation}\label{eq:total-tr-loss}
\begin{aligned}
(\hat{\mu}, \hat{\vw}, \hat{\vepsilon}) &=\argmin_{\mu\nn, \vw\nn, \vepsilon} \gR(\mu\nn, \vw\nn, \vepsilon)
\\
\hat{\vpsi}\tr & := \textstyle{\frac{1}{n}\sum_{i=1}^n g^{-1}(g(\hat{\mu}(\mX, A)) + \hat{\vepsilon}))}.
\end{aligned}
\end{equation}

In the theorem below we introduce the main result establishing the \emph{double robustness} and \emph{consistency} properties of our TR estimator:

\begin{restatable}{theorem}{thmthree}
\label{thm:three}
Let $\gM$ and $\gW$ be classes of functions such that ${\hat{\mu},\mu} \in \gM$ and ${\hat{\vw}, \vw} \in \gW$. Suppose assumptions \ref{as:unconf} and \ref{as:pos} hold, and that the following regularity conditions hold: (i) $\lVert \gM\rVert_\infty <\infty$, $\lVert \gW\rVert_\infty <\infty$, $\lVert 1/\gW\rVert_\infty <\infty$; (ii) \rev{either one model is correctly specified ($\hat{\mu}=\mu$ or $\hat{\vw}=\vw$), or both function classes have a vanishing complexity $\Rad(\gM)=O(n^{-1/2})$ and $\Rad(\gW)=O(n^{-1/2})$}; (iii) the loss function in \cref{eq:glm-loss} is Lipschitz; (iv) $\Lambda$ and $g$ are twice continuously differentiable. Then, the following statements are true:
\begin{enumerate}[itemsep=0pt,itemindent=0pt,topsep=0pt,leftmargin=18pt,parsep=0pt]
    \item The outcome and density ratio estimators of TR are consistent. That is, $\lVert \hat{\mu} - \mu \rVert_2 = o_p(1)$ and $\lVert \hat{\vw} - \vw \rVert_2 = o_p(1)$.
    \item The estimator $\hat{\vpsi}\tr$ satisfies
 $\lVert\hat{\vpsi}\tr - \vpsi \rVert_{\infty} = O_p(n^{-1/2} + r_1(n)r_2(n))$ whenever $\lVert \hat{\mu} - \mu\rVert_\infty = {O_p}(r_1(n))$ and $\lVert \hat{\vw}- \vw \rVert_\infty = {O_p}(r_2(n))$.
\end{enumerate}
\end{restatable}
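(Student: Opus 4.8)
Write the objective in \cref{eq:total-tr-loss} as $\gR_\mu(\mu\nn)+\alpha\gR_\vw(\vw\nn)+\beta_n\gR\tr(\mu\nn,\vw\nn,\vepsilon)$. Because $\lVert\gM\rVert_\infty,\lVert\gW\rVert_\infty<\infty$ and $\Lambda,g$ are continuous, $\gR\tr$ is uniformly bounded over $\gM\times\gW$ and over $\vepsilon$ in any fixed compact set; since $\beta_n\to0$, the term $\beta_n\gR\tr$ is $o_p(1)$ uniformly and hence asymptotically irrelevant to the minimization. For the other two terms I would invoke a uniform law of large numbers: symmetrization together with a Rademacher contraction inequality (using the Lipschitz losses, condition (iii)) bounds the uniform deviation of $\gR_\mu$ from its population counterpart over $\gM$ by $O(\E\,\Rad(\gM))+O_p(n^{-1/2})=O_p(n^{-1/2})$ under condition (ii), and similarly for $\gR_\vw$ over $\gW$. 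Since $\mu\in\gM$ and $\vw\in\gW$ are the unique population minimizers of their (Fisher-consistent) risks over the respective classes---see \cref{sec:architecture} for the loss specification---the standard argmin-consistency argument yields $\hat\mu\overset{p}{\longrightarrow}\mu$ and $\hat\vw\overset{p}{\longrightarrow}\vw$; in the degenerate branch of (ii) with $\hat\mu=\mu,\hat\vw=\vw$ there is nothing to prove. The same argument applied to the strictly convex map $\vepsilon\mapsto\gR\tr(\hat\mu,\hat\vw,\vepsilon)$ also gives $\hat\vepsilon\overset{p}{\longrightarrow}\vzero$, which Part 2 uses.

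\textbf{Part 2 (rate): estimating equation and von Mises expansion.} Set $\hat\mu_\vepsilon:=g^{-1}(g(\hat\mu)+\hat\vepsilon)$ and recall the plug-in $\hat\vpsi\tr=\sP_n[\hat\mu_\vepsilon(\mX,\tilde{\mA})]$. First, $\hat\vpsi\tr$ solves the empirical estimating equation $\sP_n[\varphi(\hat\vpsi\tr,\hat\mu_\vepsilon,\hat\vw)]=0$: since $\vepsilon$ appears in \cref{eq:total-tr-loss} only through $\beta_n\gR\tr$, stationarity in $\vepsilon$ forces $\partial\gR\tr/\partial\vepsilon=0$, which by \cref{thm:two} equals $\sP_n[\hat\vw(Y-\hat\mu_\vepsilon)]=0$; adding this to the definition of $\hat\vpsi\tr$ and comparing with \cref{eq:eif-def} gives the claim. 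Next, writing $\phi(\mu',\vw')(\mO):=\vw'(\mX,A)(Y-\mu'(\mX,A))+\mu'(\mX,\tilde{\mA})$ so that $\hat\vpsi\tr=\sP_n[\phi(\hat\mu_\vepsilon,\hat\vw)]$, I would split
\begin{equation*}
\hat\vpsi\tr-\vpsi \;=\; (\sP_n-\sP)\bigl[\phi(\hat\mu_\vepsilon,\hat\vw)\bigr]\;+\;\Bigl(\sP\bigl[\phi(\hat\mu_\vepsilon,\hat\vw)\bigr]-\vpsi\Bigr).
\end{equation*}
By \cref{as:unconf} one replaces $\E[Y\mid\mX,A]$ with $\mu$, and by the importance-sampling identity $\E[f(\mX,\tilde{\mA})]=\E[\vw(\mX,A)f(\mX,A)]$ (well-posed under \cref{as:pos}) the second term collapses to the second-order remainder $\sP\bigl[(\vw-\hat\vw)(\hat\mu_\vepsilon-\mu)\bigr]$, whose modulus is at most $\lVert\hat\vw-\vw\rVert_\infty\,\lVert\hat\mu_\vepsilon-\mu\rVert_\infty$.

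\textbf{Part 2 (rate): bounding the two terms.} For the empirical-process term, $\phi(\mu',\vw')$ ranges over a fixed bounded class built from $\gM$ and $\gW$ by products of bounded classes ($\vw'$ against the singleton $\{Y\}$, and $\vw'$ against $\mu'$) and by input reparametrization ($A\mapsto\tilde{\mA}$); hence its Rademacher complexity is $O(\Rad(\gM)+\Rad(\gW))=O(n^{-1/2})$, and symmetrization with a bounded-differences inequality gives $\sup_{\mu'\in\gM,\vw'\in\gW}\lvert(\sP_n-\sP)[\phi(\mu',\vw')]\rvert=O_p(n^{-1/2})$. For the remainder, I would first bound $\hat\vepsilon$: Taylor-expanding $g^{-1}$ in $\sP_n[\hat\vw(Y-\hat\mu_\vepsilon)]=0$ and using that $\hat\vw$ and $(g^{-1})'$ are bounded above and away from $0$ on the relevant range (conditions (i),(iv)) gives $\lVert\hat\vepsilon\rVert_\infty\lesssim\bigl\lvert\sP_n[\hat\vw(Y-\hat\mu)]\bigr\rvert$; splitting into $(\sP_n-\sP)$ and $\sP$ pieces and again using \cref{as:unconf} and importance sampling, the $\sP$-piece equals $\sP[(\hat\vw-\vw)(\mu-\hat\mu)]+\E[(\mu-\hat\mu)(\mX,\tilde{\mA})]$, so $\lVert\hat\vepsilon\rVert_\infty=O_p(n^{-1/2}+r_1(n))$. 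Since $g,g^{-1}$ are twice continuously differentiable, $\lVert\hat\mu_\vepsilon-\hat\mu\rVert_\infty=O_p(\lVert\hat\vepsilon\rVert_\infty)$, hence $\lVert\hat\mu_\vepsilon-\mu\rVert_\infty=O_p(n^{-1/2}+r_1(n))$ and the remainder is $O_p\bigl(r_2(n)(n^{-1/2}+r_1(n))\bigr)=O_p(n^{-1/2}+r_1(n)r_2(n))$ since $r_2(n)\to0$. Summing the two contributions and taking the max over the finite shift family $\tilde{\gP}$ (which preserves the rate) gives $\lVert\hat\vpsi\tr-\vpsi\rVert_\infty=O_p(n^{-1/2}+r_1(n)r_2(n))$.

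\textbf{Main obstacle.} The delicate part is the empirical-process term: because TR does not use sample splitting, $\hat\mu_\vepsilon$ and $\hat\vw$ cannot be frozen, so one must control the empirical process uniformly over the whole estimating class---this is precisely where conditions (ii)--(iii), and (in the nonparametric branch) the Part-1 consistency that localizes the estimators, do the work, and where the Rademacher-complexity bookkeeping (products of classes, reparametrized inputs, the $\vepsilon$-perturbation layer) is most error-prone. A secondary chore is propagating $\hat\vepsilon$ through $g^{-1}$ carefully enough that $\hat\mu_\vepsilon$ retains the rate $r_1(n)$ of $\hat\mu$ up to the $O_p(n^{-1/2})$ slack the final bound already tolerates.
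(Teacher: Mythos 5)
Your proposal is correct and follows essentially the same route as the paper: argmin consistency via Rademacher complexity and the vanishing $\beta_n\gR\tr$ term for Part 1, and for Part 2 the same decomposition into a uniform empirical-process term of order $O_p(n^{-1/2})$ plus the doubly-robust remainder $\E[(\hat{\vw}-\vw)(\mu-\hat{\mu}_\vepsilon)]$, with $\hat{\vepsilon}$ controlled at rate $O_p(r_1(n)+n^{-1/2})$ by Taylor-expanding the stationarity condition from \cref{thm:two} and propagating it through $g^{-1}$ via the mean value theorem.
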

\cref{thm:three} shows that the TR learner of the SRF achieves ``optimal'' root-$n$ convergence when $r_1(n)=r_2(n)=n^{-1/4}$ or when either $r_1$ or $r_2$ vanishes. Using standard arguments involving concentration inequalities, the Lipschitz assumption placed on the loss function can be relaxed by assuming that the loss function has a vanishing Rademacher complexity \citep{wainwright2019high}. 

\begin{proof}  The proof strategy follows \citet{nie2021vcnet} but is tailored to the SRF case and the more general exponential family of loss functions. We proceed in two steps.

\emph{Step 1: showing consistency of the outcome and density ratio models}.\quad  Denote the \emph{population risk} as
$$
\gR^*(\mu\nn, \vw\nn):=\sP\gL(\mu\nn, \vw\nn, \vzero),
$$
 where $\gL(\mu\nn, \vw\nn, \vzero)$ is the loss function in \cref{eq:glm-loss} \emph{without the targeted regularization}. The minimizers of $\gR^*$ are denoted $\mu^*$ and $\vw^*$. They are the estimators of $(\mu, \vw)$ under infinite data. We will now bound the risk loss, showing that the risk of the finite-sample estimators $(\hat{\vmu}, \hat{\vw})$ minimizing \cref{eq:glm-loss} is sufficiently close to that of $(\mu^*, {\vw}^*)$. Precisely, we shall show that
  \begin{equation}\label{eq:risk-consistency}
   \gR(\hat{\mu}, \hat{\vw}) - \gR(\mu^*, \vw^*)  = o(1) + O_p(n^{-1/2}). 
  \end{equation}
  To prove this fact, we first note that 
  \begin{equation}\label{eq:reg-risk}
    \begin{aligned} 
   0 & \leq \gR^*(\hat{\mu}, \hat{\vw}) - \gR^*(\mu^*, \vw^*)  \\
  & = (\gR(\hat{\mu}, \hat{\vw}, \vzero) - \gR(\mu^*, \vw^*, \vzero)) \\ 
    &\quad\hspace{1cm} (\sP - \sP_n)\gL(\hat{\mu}, \hat{\vw}, \vzero) + (\sP_n - \sP)\gL(\mu^*, \vw^*, \vzero).
  \end{aligned}
  \end{equation}
The first inequality is because of the definition of the risk minimizer and for the second one we added and subtracted the empirical risk and rearranged the terms.

Observe that the second and third terms in the last expression are empirical processes, suggesting we can use the uniform law of large numbers (ULLN). Indeed, the regularity assumptions on the Rademacher complexity provide a sufficient condition \citep{wainwright2019high}. Additionally, the order of the Rademacher complexity is preserved under Lipschitz transforms, then, using that the loss is Lipschitz, it follows that the class $\gF = \{\gL(\mu\nn, \vw\nn, \vzero)\colon \mu\nn \in \gM, \vw\nn \in \gW\}$ has a Rademacher complexity of order $O(n^{-1/2})$. Further, uniform boundedness is also preserved under Lispschitz transformations. Thus, the uniform law of large numbers applies, implying that the last two terms are $O_p(n^{-1/2})$. Note that the Lipschitz condition can be relaxed by the direct assumption that $\gL$ satisfies the boundedness and complexity conditions for the ULLN \citep{wainwright2019high}.

We now bound the first term. We begin by adding and subtracting the regularization terms, and compute:
  \begin{equation}\label{eq:reg-risk2}
    \begin{aligned} 
   & \gR(\hat{\mu}, \hat{\vw}, \vzero) - \gR(\mu^*, \vw^*, \vzero)  \\ 
   &\quad = \gR(\hat{\mu}, \hat{\vw}, \hat{\vepsilon}) - \gR(\mu^*, \vw^*, \vzero) \\
   &\quad\hspace{1cm} + \beta_n(\gR\tr(\mu^*, \vw^*, \vzero) - \gR\tr(\hat{\mu}, \hat{\vw}, \hat{\vepsilon})) \\
   &\quad \leq^{(a)} \beta_n(\gR\tr(\mu^*, \vw^*,\vzero) - \gR\tr(\hat{\mu}, \hat{\vw}, \hat{\vepsilon})) \\
   &\quad =^{(b)} \beta_n \sP_n(\gL_\mu(\mu^*) + O(1)).\\
   &\quad = \beta_n((\sP_n - \sP)\gL_\mu(\mu^*) + \gR_\mu(\mu^*) + O(1)) \\
   &\quad =^{(c)} \beta_n(O_p(n^{-1/2})) + O(1)) \\
   &\quad =^{(d)} O_p(n^{-1/2}) + o(1)  \\
  \end{aligned}
  \end{equation}
  Inequality (a) is due to $(\hat{\mu}, \hat{\vw}, \hat{\vepsilon})$ being a minimizer for the regularized risk; (b) is the result of $\gR\tr(\hat{\mu}, \hat{\vw}, \hat{\vepsilon})$ being bounded below since the exponential family of distributions is log-concave and $\gR\tr(\mu^*, \vw^*,\vzero) =\sP_n\gL_\mu(g(\mu^*))$; (c) uses the uniform law of large numbers from the Rademacher complexity and the uniform boundedness, and the fact that $\gL$ is Lipschitz; (d)
 follows from $\beta_n \to 0$. Combining \cref{eq:reg-risk} and \cref{eq:reg-risk2}, we get $\gR^*(\hat{\mu}, \hat{\vw}) - \gR^*(\mu^*, \vw^*) = o_p(1)$. 
  
  The result now follows from observing that the population risk has a unique minimizer up to the reparameterization of the network weights. Hence,  $\lVert \hat{\mu} - \mu \rVert_2 =o_p(1)$ and $\lVert \hat{\vw} - \vw \rVert_2 =o_p(1)$.

\emph{Step 2: Proving convergence and efficiency of $\hat{\vpsi}\tr$}.\quad Direct computation gives
\begin{equation}\label{eq:psi-diff}
\begin{aligned}
& \lVert \hat{\vpsi}\tr - \vpsi \rVert \\
& = \lVert \textstyle{\frac{1}{n}\sum_{i=1}^n \tilde{\mu}(\mX_i, \tilde{A}_i) } - \vpsi \rVert \\
& =^{(a)} \lVert \textstyle{\frac{1}{n}\sum_{i=1}^n \{ \tilde{\mu}(\mX_i, \tilde{A}_i)  + \hat{\vw}(\mX_i, A_i)(Y_i - \tilde{\mu}(\mX_i, A_i))\}} -  \vpsi \rVert \\
& =^{(b)} \lVert \E[\hat{\vw}(\mX, A)(Y - \tilde{\mu}(\mX, A))+ \tilde{\mu}(\mX, \tilde{A}) ] -  \vpsi  \rVert + O_p(n^{-1/2}) \\
& =^{(c)} \lVert \E[\hat{\vw}(\mX, A)(\mu(\mX, A) - \tilde{\mu}(\mX, A))+ \tilde{\mu}(\mX, \tilde{A})] -  \vpsi  \rVert + O_p(n^{-1/2}) \\
& =^{(d)} \lVert \E[(\hat{\vw}(\mX, A) - \vw(\mX, A))(\mu(\mX, A) - \tilde{\mu}(\mX, A))]   \rVert + O_p(n^{-1/2}), \\
\end{aligned}
\end{equation}
where (a) is by the property of the targeted regularization, namely, $\frac{1}{n}\sum_{i=1}^n \hat{\vw}(\mX_i, A_i)(Y_i - \tilde{\mu}(\mX_i, A_i))=0$; (b) is because of the uniform concentration of the empirical process, again using the vanishing Rademacher complexity and uniform boundedness; (c) integrates over $y$; (d) uses the definition of $\psi$ and the importance sampling formula with $\vw$. Since the link function $g$ is continuously differentiable, invertible and strictly monotone, then by the mean value theorem there exists $\vepsilon' \in (0, \hat{\vepsilon})$ such that
$$
\begin{aligned}
\tilde{\mu}(\mX, A) &=g^{-1}(g(\hat{\mu}(\mX, A)) +\hat{\vepsilon}) \\
& = \hat{\mu}(\mX, A) +  (g^{-1})'(g(\hat{\mu}(\mX, A)) + \vepsilon')\hat{\vepsilon}. \\
\end{aligned}
$$
From the uniform boundedness and smoothness of the link function, we have that $\hat{c} =(g^{-1})'(g(\hat{\mu}(\mX, A) + \vepsilon') < C$ for some constant $C > 0$. Then, using the above result in the last term of \cref{eq:psi-diff}, we obtain
\begin{equation}\label{eq:check-mate}
  \begin{aligned}
 &\lVert \E[(\hat{\vw}(\mX, A) - \vw(\mX, A))(\mu(\mX, A) - \tilde{\mu}(\mX, A))]   \rVert\\
  & \quad \leq \E[(\hat{\vw}(\mX, A) - \vw(\mX, A))(\mu(\mX, A) - \hat{\mu}(\mX, A))] \\
  &\quad\hspace{1cm} + C \lVert (\hat{\vw}(\mX, A) - \vw(\mX, A)) \hat{\vepsilon}\rVert \\
  & \quad \leq O_p(r_1(n)r_2(n)) +  O_p(r_2(n))\lVert  \hat{\vepsilon} \rVert 
  \end{aligned}
  \end{equation}
To complete the proof, we will show that $\lVert\hat{\vepsilon}\rVert  = O_p(r_1(n)) + O_p(n^{-1/2})$. Letting $\hat{c}_i$ be as in the Taylor expansion above, we can re-arrange the targeted regularization condition such that
$$
\begin{aligned}
\vzero &= \frac{d}{d\vepsilon} \gR\tr(\hat{\mu}, \hat{\vw}, \hat{\vepsilon}) \\
&= \frac{1}{n}\sum_{i=1}^n \hat{\vw}(\mX, A)(\hat{\mu}(\mX, A) - Y) + \frac{1}{n}\sum_{i=1}^n \hat{\vw}(\mX_i, A_i) \hat{c}_i\hat{\vepsilon}.
\end{aligned}
$$
Hence, we can write $\hat{\vepsilon}$ with the closed-form expression
$$
\begin{aligned}
\hat{\vepsilon} &= \argmin_{\vepsilon} \gR\tr(\hat{\mu}, \hat{\vw}, \vepsilon) = \frac{n^{-1}\sum_{i=1}^n \hat{\vw}(\mX_i, A_i)(Y_i -  \hat{\mu}(\mX_i, A_i))}{n^{-1}\sum_{i=1}^n\hat{c}_i\hat{\vw}(\mX_i, A_i)^2}.
\end{aligned}
$$
Observe now that $\hat c_i$ is uniformly lower bounded since the denominator is uniformly bounded in a neighborhood of the solution due to the assumption $\lVert 1/\gW \rVert_\infty < \infty$, and $g$ is strictly monotone and continuously differentiable. Hence, there is $C'>0$ such that 
\begin{equation}
\label{eq:epsilon-bound}
\begin{aligned}
\lVert \hat{\vepsilon} \rVert & \leq C' \lVert \textstyle{n^{-1}\sum_{i=1}^n \hat{\vw}(\mX_i, A_i)(Y_i -  \hat{\mu}(\mX_i, A_i))}\rVert\\
& \leq^{(a)} C' \lVert \E[\hat{\vw}(\mX, A) (Y -  \hat{\mu}(\mX, A)) \rVert + O_p(n^{-1/2}) \\
&\leq^{(b)}  O_p(r_1(n)) + O_p(n^{-1/2}),
\end{aligned}
\end{equation}
where (a) uses the uniform concentration of the empirical process and (b) again uses the uniform boundedness of $\hat{\vw}$. The proof now follows from combining equations (\ref{eq:psi-diff}), (\ref{eq:check-mate}), and (\ref{eq:epsilon-bound}).
\end{proof}

\begin{figure}[tbp]
  \centering
  \vskip -4pt
  \begin{subfigure}[t]{0.5\textwidth}
    \includegraphics[width=0.95\linewidth]{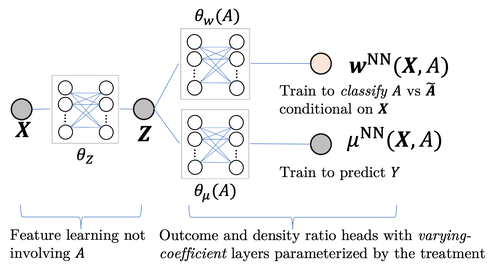}
    \end{subfigure}%
    \vskip - 8pt
  \caption{\tresnet architecture using a head for the density ratio model and a head for the outcome model.
  }
  \label{fig:architecture}
\end{figure}

\begin{table*}[tb]
    \centering
     \small %
   \caption{Experiment results. The table shows the $\sqrt{\textsc{mise}}$ across 100 random seeds with 95\%  confidence intervals computed with the asymptotic normal formula.}
     \begin{subtable}{\linewidth}\centering
    \begin{tabular}{l|rrrr|rrrr}
        \toprule
        & \multicolumn{4}{c|}{\sc Spline-based varying coefficients} & \multicolumn{4}{c}{\sc Piecewise linear varying coefficients} \\
        \cmidrule(lr){2-5} \cmidrule(lr){6-9}
        \textsc{Dataset} & \sc aipw$_\text{vc}$ & \sc outcome$_\text{vc}$ & \sc tresnet$_\text{vc}$$^*$ & \sc vcnet & \sc aipw$_\text{pl}$ & \sc drnet & \sc tresnet$_\text{pl}$$^*$ & \sc drnet+tr$_\text{erf}$ \\
        \midrule
        \sc ihdp & 3.15 (0.37) & 2.19 (0.06) &\bf  0.61 (0.03) & 0.63 (0.03) & 1.18 (0.14) & 2.36 (0.06) &\bf  0.15 (0.02) & 0.19 (0.02) \\
        \sc news & 1.5 (0.19) & 3.65 (0.04) & \bf 0.18 (0.02) & 0.28 (0.03) & 0.99 (0.12) & 0.99 (0.1) & \bf 0.17 (0.01) & 0.26 (0.03) \\
        \sc sim-B & 4.1 (0.57) & 0.5 (0.05) & \bf 0.26 (0.03) & 0.29 (0.04) & 1.46 (0.2) & 1.6 (0.2) & \bf 0.14 (0.02) & 0.16 (0.02) \\
        \sc sim-N & 5.69 (0.64) & 0.52 (0.05) & \bf 0.32 (0.02) & \bf 0.32 (0.03) & 1.81 (0.25) & 0.95 (0.06) & \bf 0.14 (0.01) & 0.15 (0.01) \\
        \sc tcga-1 & 1.13 (0.08) & 0.63 (0.02) &\bf  0.8 (0.01) & 0.87 (0.03) & 0.76 (0.05) & 0.62 (0.02) & \bf 0.61 (0.02) & 0.69 (0.03) \\
        \sc tcga-2 & 0.76 (0.09) & 0.24 (0.02) & \bf 0.18 (0.01) & 0.24 (0.02) & 0.36 (0.05) & 0.17 (0.01) & \bf 0.12 (0.0) & 0.16 (0.01) \\
        \sc tcga-3 & 0.83 (0.11) & 0.38 (0.03) & \bf 0.1 (0.01) & 0.15 (0.02) & 0.59 (0.06) & 0.59 (0.04) &\bf 0.08 (0.01) & 0.15 (0.02) \\
        \bottomrule
    \end{tabular}
    \caption{Performance of \tresnet in two varying-coefficient architectures.}
    \label{tbl:bench-arch}
    \end{subtable}\\
    \begin{subtable}{\linewidth}\centering
    \begin{tabular}{l|rr|rr}
        \toprule
        experiment & \sc  outcome$_\text{vc}$w/poisson loss$^*$ & \sc  outcome$_\text{vc}$w/mse loss & \sc tresnet$_\text{vc}$w/poisson loss$^*$ &\sc tresnet$_\text{vc}$w/mse loss \\
        \midrule
        \sc ihdp & \bf 18.82 (3.18) & 3726.92 (357.31) & \bf 2.04 (0.08) & 10986.43 (211.99) \\
        \sc news & \bf 3.41 (0.25) & 372.24 (52.02) & \bf 0.33 (0.05) & 1187.94 (100.8) \\
        \sc sim-B & \bf 1222.61 (1269.53) & 8433.98 (1327.22) &\bf  1113.58 (1270.49) & 16902.41 (1233.54) \\
        \sc sim-N & \bf 50.72 (4.45) & 2491.63 (310.36) & \bf 4.6 (0.22) & 18062.85 (243.2) \\
        \sc tcga-1 & \bf 184.89 (6.67) & 6682.91 (474.32) & \bf 40.56 (19.2) & 16307.15 (232.64) \\
        \sc tcga-2 & \bf 48.3 (1.44) & 5854.08 (483.41) & \bf 398.82 (143.96) & 16112.02 (186.99) \\
        \sc tcga-3 & \bf 18.27 (2.73) & 14381.06 (516.25) & \bf 12.92 (2.3) & 8565.17 (447.13) \\
        \bottomrule
    \end{tabular}
    \caption{Performance of \tresnet with and without the Poisson GLM formulation for count-based outcomes.}
    \end{subtable}
\end{table*}

\subsection{Architectures for Estimating the Outcome and Density Ratio Functions}\label{sec:architecture}

{Neural network architectures for nuisance function estimation have been widely investigated in causal inference (see \citet{farrell2021deep} for a review). We use the architectures proposed in the TR literature as a building block, particularly for continuous treatments \citep{nie2021vcnet}. Nevertheless, previous work in TR has not yet investigated the architectures required for SRF estimation. In particular, we require a new architecture to estimate the density ratio $\vw$. Rather, previous works have primarily focused on architectures for estimating propensity scores as required by traditional causal effect estimation (e.g. for estimating ATEs and ERFs).}

We describe a simple yet effective architecture for estimating $\mu$ and $\vw$. To keep the notation simple, we will write $f_\theta$ to denote generic output from a neural network indexed by weights $\theta$. The architecture has three components, illustrated in \cref{fig:architecture}. The first one maps the confounders $\mX$ to a latent representation $\mZ=f_{\theta_Z}(\mX) \in \R^d$. This component will typically be a multi-layer perception (MLP). The second and third components are the outcome and density-ratio heads, which are functions of $\mZ$ and the treatment. We describe all three components in detail below.
  
  \textbf{Outcome model}\quad Recall that we assume the outcome $Y$ follows a conditional distribution from the exponential family. That is, $p(Y|\mX, A)\propto \exp(Y\eta(\mX, A) - \Lambda(\eta(\mX, A))$ with an invertible link function  $g$ satisfying $\mu(\mX, A)=g^{-1}(\eta(\mX, A))$. We can identify the canonical parameter $\eta$ with the output of the neural network and learn $\hat{\mu}$ by minimizing the empirical risk
  \begin{equation}
    \label{eq:muloss}
    \begin{aligned}
\gR_\mu(\mu\nn)=\frac{1}{n} \sum_{i=1}^n \left\{\Lambda(g^{-1}(\mu\nn(\mZ_i, A_i)) - Yg^{-1}(\mu\nn(\mZ_i, A_i))\right\}.
    \end{aligned}
    \end{equation}
    
Next, we need to select a functional form for the neural network. An MLP parameterization with the concatenated inputs of $(\mZ, A)$--the na\"ive choice--would likely result in the effect of $A$ being lost in intermediate computations. Instead, we adopt the \emph{varying coefficient} approach by setting $\mu\nn(\mX, A)=g^{-1}(f_{\theta_\mu(A)}(\mZ))$ \citep{nie2021vcnet,chiang2001smoothing}. With this choice, the weights of each layer are dynamically computed as a function of $A$ obtained from a linear combination of \underline{basis} functions spanning the set of admissible functions on $A$. The weights of the linear combination are themselves a learnable linear combination of the hidden outputs from the previous layer.
We refer the reader to \citet{nie2021vcnet} for additional background on varying-coefficient layers. Our experiments suggest TR is beneficial for different choices of basis functions.

\emph{Estimation of $\vw$ via classification}.\quad The density ratio head $\vw\nn=(w\nn_{\tilde{p}})_{\tilde{p}\in \tilde{\gP}}$ is trained using an auxiliary classification task. For this purpose, we use an auxiliary classification task where the positive labels are assigned to the samples from $\tilde{A}^{\tilde{p}}$ and the negative labels to the samples with $A$. This loss can be written as:

\begin{equation}
  \label{eq:wloss}
  \begin{aligned}
 \gR_w(\vw\nn) &= \frac{1}{2n|\tilde{\gP}|} \sum_{i=1}^n\sum_{\tilde{p} \in \tilde{\gP}}\Big\{
 B(\zeta_{1,i}^{\tilde{p}}, \vone)+ B(\zeta_{0,i}, \vzero) 
   \Big\} \\
    \zeta_{1,i}^{\tilde{p}} &= \log w\nn_{\tilde{p}}(\mX_i, \tilde{A}_i^{\tilde{p}}) \\
   \zeta_{0,i} &= \log w\nn_{\tilde{p}}(\mX_i, A_i)
  \end{aligned}
\end{equation}
where $B$ is the binary classification loss and $\zeta_{0,i}, \{\zeta_{1,i}^{\tilde{p}}\}_{\tilde{p}\in\tilde{\gP}}$ are the label predictions on the logit scale, which coincide with the log-density ratios \cite{sugiyama2012density}. 
To capture the effect of $A$ more effectively, we propose parameterizing the network using the varying-coefficient structure discussed in the previous section with $\log w_{\tilde{p}}\nn(\mX, A)=f_{\theta^{\tilde{p}}_w(A)}(\mZ)$. To our knowledge, we are the first to consider a varying-coefficient architecture for conditional density ratio estimation.

\section{Simulation study}\label{sec:experiments}

We conducted simulation experiments to validate the design choices of \tresnet. The so-called fundamental problem of causal inference is that the counterfactual responses are never observed in real data. Thus, we must rely on widely used semi-synthetic datasets to evaluate the validity of our proposed estimators. First, we consider two datasets introduced by \citet{nie2021vcnet}, which are continuous-treatment adaptions to the popular datasets \textsc{ihdp} \citep{hill2011bayesian} and \textsc{news} \citep{newman2008bag}. We also apply our methods to \cite{nie2021vcnet}'s fully simulated data, \textsc{sim-N}. In addition, we consider the fully simulated dataset described in~\cite {bahadori2022end}, which features a continuous treatment and has been previously used for calibrating models in air pollution studies. Finally, we also consider three variants of the \textsc{tcga} dataset presented in \citet{bica2020estimating}. The three variants consist of three different dose specifications as treatment assignments and the corresponding dose-response as the outcome. 

The datasets described here have been employed without substantial modifications from the original source studies to facilitate fair comparison. Note that for each of these synthetic and semi-synthetic datasets, we have access to the true counterfactual outcomes, which allows us to compute sample SRFs exactly. We will consider the estimation task of 20 equally-spaced percent reduction shifts between 0-50\% from the current observed exposures. More specifically, $\tilde{A}=(1 - c)A$ for values of $c$ in $0-50\%$.

\emph{Evaluation metric and task}.\quad
 Given a semi-synthetic dataset $\gD$, consider an algorithm that produces an estimator $\hat{\vpsi}^{(s)}_{\tilde{p}, \gD}$ of ${\vpsi}^{(s)}_{\tilde{p}, \gD}$ given an exposure shift $\tilde{p}$ and random seed $s$. To evaluate the quality of the estimator, we use the \emph{mean integrated squared error}
$\textsc{mise}_\gD=(n_\text{seeds}|\tilde{\gP}|)^{-1} \sum_{s}\sum_{\tilde{p}}\lvert\hat{\vpsi}^{(s)}_{\tilde{p}, \gD} - \vpsi^{(s)}_{\tilde{p}, \gD} \rvert^2.$
This metric is a natural adaption of an analogous metric commonly used in dose-response curve estimation \citep{bica2020estimating}. 

\emph{Experiment 1: Does targeting SRFs specifically improve their estimation (compared to no targeting or standard ERF targeting?)}\quad 
We evaluate two variants of \tresnet against alternative SRF estimators, including \textsc{vcnet} \citep{nie2021vcnet} and \textsc{drnet} \citep{schwab2020learning}, two prominent methods used in causal TR estimation for continuous treatments. Since these methods were not designed specifically for SRFs, we expect that their performance for this task can be improved by adapting TR through their baseline architectures.

The first variant of \tresnet uses varying-coefficient layers based on splines---see the discussion in \cref{sec:architecture} for background. We compare this variant, named \tresnetvc, with the following baselines:
\begin{enumerate}
\item  $\textsc{aipw}_\textsc{vc}$, the AIPW-type estimator for SRFs discussed in \cref{sec:tr-srf}, wherein we fit separate outcome and density ratio models which are then substituted into \cref{eq:debiasing};
\item $\textsc{outcome}_\textsc{vc}$, which uses the same outcome model as \tresnetvc, but without the TR and density ratio heads;
\item \textsc{vcnet}, which uses a similar overall architecture as \tresnetvc, but with a TR designed for ERFs rather than for SRFs.
\end{enumerate}

The second variant of \tresnet uses varying coefficients based on piecewise linear functions instead of splines. This variant, \tresnetpl, is compared against the following analogous baselines: 
\begin{enumerate}
\item $\textsc{aipw}_\textsc{pl}$, which is the piecewise analogue to $\textsc{aipw}_\textsc{vc}$;
\item \textsc{drnet}, based on \citet{schwab2020learning}, functioning as the piecewise linear analogue to $\textsc{outcome}_\textsc{vc}$;
\item $\textsc{drnet}+\textsc{tr}_\textsc{erf}$, which is the analogue to \textsc{vcnet} that can be constructed by adding a density ratio head and TR designed for ERFs rather than SRFs.
\end{enumerate}

\cref{tbl:bench-arch} shows the results of this experiment. For both architectures, the \tresnet variants achieve the best performance. \tresnetvc is somewhat better than \textsc{vcnet}, which have comparable architectures although \tresnet uses a different TR implementation specific for SRF estimation rather than ERF estimation. Likewise, \tresnetpl outperforms $\textsc{drnet}+\textsc{tr}_\textsc{erf}$. These moderate but consistent performance improvements suggest the importance of SRF-specific forms of TR. We also see strong advantages against outcome-based predictions and AIPW estimators, suggesting that the TR loss and the shared learning architecture is a boon to performance. These results are compatible with observations from previous work in the TR literature \citep{nie2021vcnet, shi2019adapting}.

\emph{Experiment 2: Does TR improve estimation when count-valued outcomes are observed?}\quad For these experiments, we used the spline-based variant and evaluate whether \tresnet with the Poisson-specific TR, explained in \cref{sec:tresnet}, performs better than the mean-squared error (MSE) loss variant when the true data follows a Poisson distribution.  This evaluation is important since our application consists of count data requiring a Poisson model which are widely used to investigate the effects of \PM on health \citep{wu2020evaluating,josey2022estimating}. We construct similar semi-synthetic datasets as in Experiment 1, however in this experiment the outcome-generating mechanism samples from a Poisson distribution rather than a Gaussian distribution. The results of this experiment are clear--using the correct exponential family for the outcome model is critical, regardless of whether TR is implemented. 

\section{Main Application: The Effects of Stricter Air Quality Standards}\label{sec:application}

We implemented \tresnet for count data to estimate the health benefits caused by shifts to the distribution of \PM that would result from lowering the NAAQS---the regulatory threshold for the annual-average concentration of \PM enforced by the EPA.

\begin{figure}[tbp] %
    \centering
    \includegraphics[width=0.77\linewidth]{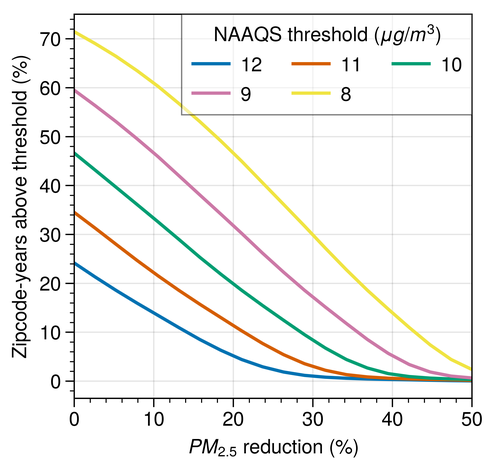}
    \vskip -8pt
    \caption{Fraction (\%) of observed units remaining above \PM limit as a function of reduction (\%) considering different NAAQS (current NAAQS is set at 12~\mug).}
    \label{fig:results-quantiles}
\end{figure}

\emph{Data}.\quad
The dataset is comprised of Medicare recipients\footnote{Access to Medicare data is restricted without authorization by the Centers for Medicare \& Medicaid Services since it contains sensitive health information. The authors have completed the required IRB and ethical training to handle these datasets.} from 2000--2016, involving 68 million unique individuals. The data includes measurements on participant race/ethnicity, sex, age, Medicaid eligibility, and date of death, which are subsequently aggregated to the annual ZIP-code level. The \PM exposure measurements are extracted from a previously fit ensemble prediction model \citep{di2019ensemble}. The confounders include measurements on meteorological information, demographics, and the socioeconomic status of each ZIP-code. Calendar year and census region indicators are also included to account for spatiotemporal trends. To compile our dataset, we replicated the steps and variables outlined by \citet{wu2020evaluating}.

\emph{Exposure shifts}.\quad
We consider two types of \PM shifts, \emph{cutoff shits} and \emph{percent reduction shift}, each providing different perspectives and insights. The counterfactuals implied from these scenarios are illustrated in figures \ref{fig:toy:cutoff} and \ref{fig:toy:percent}, respectively. First, a \emph{cutoff shift}, parameterized by a threshold $d$, encapsulates scenarios in which every ZIP-code year that exceeded some threshold are truncated to that maximum threshold. Mathematically, the shift is defined by the transformation $\tilde{A} = \min(A, c)$. To be more succinct, the exposure shift defines a counterfactual scenario. For this application, the threshold $c$ is evaluated at equally spaced points starting with 15 \mug moving down to 6 \mug. We expect that at 15 \mug there will be little to no reduction in deaths since $>99\%$ of observations fall below that range. We can contemplate the proposed NAAQS levels through $d$ assuming that full compliance to the new regulation holds for incompliant ZIP-codes. The exposure shift should otherwise not affect already compliant ZIP-codes. Second, we  onsider \emph{percent reduction shifts}. This scenario assumes that all ZIP-code years reduce their pollution levels proportionally from their observed value. More precisely, the shift is defined as $\tilde{A}=A(1 - c)$. We considered a range of percent reduction shifts between $c\in (0, 50)\%$. We can interpret these shifts in terms of the NAAQS by mapping each percent reduction to a compliance percentile. For instance, \cref{fig:results-quantiles} shows that, under a 30\% overall reduction in historical values, approximately 82\% would comply with a NAAQS of 9 \mug.

\emph{Implementation}.\quad
We implement \tresnet using varying-coefficient splines as in \cref{sec:experiments}. We select a NN architecture using the out-of-sample prediction error from a 20/80\% test-train split to choose the number of hidden layers (1-3 layers) and hidden dimensions (16, 64, 256). We found no evidence of overfitting in the selected models. To account for uncertainty in our estimations, we train the model on 100 bootstrap samples, each with random initializations, thereby obtaining an approximate posterior distribution. Deep learning ensembles have been previously shown to approximate estimation uncertainty well in deep learning tasks \citep{izmailov2021bayesian}.

\begin{figure}  %
  \centering
  \includegraphics[width=0.9\linewidth]{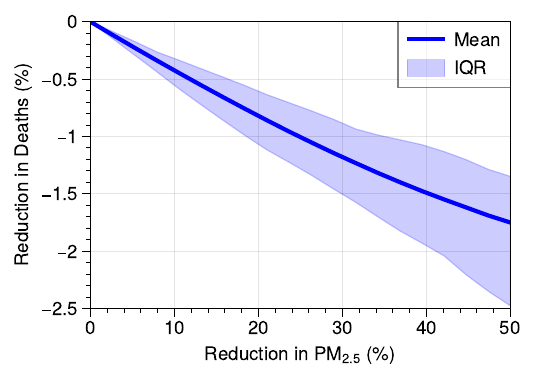}
  \vskip -8pt
  \caption{Estimated SRF of the total deaths (\%) for different cutoffs.}
  \label{fig:results-percent}
\end{figure}

\emph{Results}.\quad
\cref{fig:results-cutoff} in the introduction presents the effects of shifting the \PM distribution at various cutoffs on the expected reduction in deaths. The slope is steeper at stricter/lower cutoffs, likely because lower cutoffs affect a larger fraction of the observed population and reduce the overall \PM. For instance, figure \cref{fig:results-cutoff} shows that had no ZIP-code years exceeded 12 \mug, the observed death counts would have decreased by around 1\%. If the cutoff is lowered to 9 \mug, then deaths could have fallen by around 4\%. The slope becomes increasingly steeper as the \PM threshold is reduced, suggesting an increasing benefit from lowering the standard concentration level. Another way to interpret this result is to say that there is a greater gain to reducing mortality caused by \PM from lowering the concentration level from 10 to 8 \mug than there is from lowering it from 12 to 10 \mug, despite the obvious observation that both contrasts examine a \PM reduction of 2 \mug.

The results of the percent-reduction shift are presented in~\cref{fig:results-percent}. The decrease in deaths is approximately linear with respect to the percent decrease in \PM. As such, the SRF shows an approximate 0.5\% decrease in deaths resulting from a 10\% decrease in \PM. This result is consistent with previous causal estimates of the marginal effect of \PM exposure on elder mortality \citep{wu2020evaluating}. Percent reduction offers a complementary view to the cutoff shift response function as it might pertain to policymakers decisions on the future of the NAAQS. 

\section{Discussion and Limitations}\label{sec:discussion}
The results of \cref{sec:application} demonstrate the significant need to establish proper estimators when addressing the pressing public health question regarding the potential health benefits of lowering the NAAQS in the United States. In response to this question, we introduce the first causal inference method to utilize neural networks for estimating SRFs. Furthermore, we have extended this method to handle count data, which is crucial for our application in addition to other public health and epidemiology contexts. 

There are numerous opportunities to improve to our methodology. First, our uncertainty assessment of the SRF relies on the bootstrap and ensembling of multiple random seeds. While these methods are used often in practice, future research could explore the integration of \tresnet with Bayesian methods to enhance uncertainty quantification. Second, our application of the methodology focuses on exposure shifts representing complementary viewpoints to the possible effects of the proposed EPA rules on the NAAQS. However, it does not determine the most probable exposure shift resulting from the new rule's implementation, based on historical responses to changes in the NAAQS. Subsequent investigations should more carefully consider this aspect of the analysis. The assessment of annual average \PM concentrations at the ZIP-code level is based on predictions rather than on actual observable values, introducing potential attenuation bias stemming from measurement error. Nonetheless, previous studies on measurement error involving clustered air pollution exposures have demonstrated that such attenuation tends to pull the causal effect towards a null result \citep{josey2022estimating, wei2022impact}. 

It is essential to recognize that the SRF framework places additional considerations on the analyst designing the exposure shift. This newfound responsibility can be seen as both a disadvantage and an advantage. However, it highlights the need for an explicit and meticulous statement of the assumptions underlying the considered exposure shifts in order to mitigate the potential misuse of SRF estimation techniques.

\begin{acks}
The work was supported by the National Institutes of Health
(R01-AG066793, R01-ES030616, R01-ES034373), the Alfred P. Sloan Foundation (G-2020-13946), and the Ren Che Foundation supporting the climate-smart public health project in the Golden Lab. Computations ran on the FASRC Cannon cluster supported by the FAS Division of Science Research Computing Group at Harvard University.

\end{acks}

\bibliographystyle{ACM-Reference-Format}
\bibliography{sample-base}

\appendix

\section{Additional proofs}\label{sec:eif}

\paragraph{Causal identification}
We first show the identification result from \cref{sec:problem}, which establishes that the outcome function $\mu$ is equal to the conditional expectation of the outcome given the treatment and covariates. As a corollary, the SRF estimand $\vpsi$ can be estimated from observed data.

\begin{proposition}\label{prop:identification}
  Suppose \cref{as:unconf} holds. Then $\mu(\mX, A)=\E[Y \mid \mX, A]$. 
\end{proposition}
  \begin{proof} The proof is fairly standard in the causal inference literature \cite{imbens_rubin_2015}.
    Since the treatment and potential outcomes are independent conditional on $\mX$ (unconfoundedness), properties of the conditional expectation yield
    $$
    \begin{aligned}
    \mu(\vx, a)&=\E[Y^a \mid \mX=\vx]\\
    &= ^{(a)} \E[Y^a \mid \mX=\vx, A=a] \\
    &=\E[Y^A \mid \mX=\vx, A=a] =^{(b)} \E[Y \mid \mX=\vx, A=a].
    \end{aligned}
    $$
The (a) uses unconfoundedness (\cref{as:unconf}) and (b) $Y^A=Y$ by definition.
\end{proof}

\begin{corollary} The SRF $\vpsi$ is identified by the data distribution.
\end{corollary}
\begin{proof}
From the definitions and the importance sampling formula, we have
$$
\begin{aligned}
  \vpsi & := \E_{\mX \sim p(\mX)}[\E_{\tilde{\mA} \sim \tilde{p}(\tilde{\mA} | \mX)}[\mu(\mX, \tilde{\mA})| \mX]] \\
  &= \E_{\mX \sim p(\mX)}[\E_{\tilde{\mA} \sim \tilde{p}(\tilde{\mA} | \mX)}[E[Y | \mX, A=\tilde{\mA}]| \mX, \tilde{\mA}]] \\
\end{aligned}
$$
which shows identification as the right-hand side only involves an expectation over the observed data distribution by \cref{prop:identification}.
\end{proof}

\paragraph{Derivation of the EIF} 
When we fit a statistical model to a dataset, each data point contributes to the estimated parameters of the model. The efficient influence function (EIF) effectively measures how sensitive an estimate is to the inclusion or exclusion of individual data points. In other words, it tells us how much a single data point can influence the estimate of a causal effect. Formally, the EIF is defined as the canonical gradient \citep{van2011targeted} of the targeted parameter. Before defining it properly, we need to introduce a few objects and notations.  First, we let $\{\sP_t: t\in \R \}$ be a smooth parametric submodel, that is, a parameterized family of distributions such that $\sP_0=\sP$. Next, we adopt the following notation convention
$$
\begin{aligned}
  \tilde{p}_t(\vo) &:= p_t(y|\vx,\tilde{\va})\tilde{p}_t(\tilde{\va}|\vx)p_t(\vx) \\
  p_t(\vo)& := p_t(y|\vx,a){p}_t(a|\vx)p_t(\vx) \\
  w_t(\vx, \va) & := \tilde{p}_t(\va|\vx)/p_t(a|\vx)
\end{aligned}
$$
where the subscript $t$ indicates that the density is from the data distribution is taken from $\sP_t$. The score function for each member of the submodel is defined as
$$
s_t(\vo):=\frac{\diff}{\diff t}\Big\vert_{h=0}\log  \tilde{p}_{t + h}(\vo),
$$ 
Similarly, we can define the SRF estimand at each member of the submodel as
  $$
  \begin{aligned}
  \vpsi(\sP_t)=\int y \tilde p_t(\vo)\diff{\vo}. 
  \end{aligned}
  $$
It follows that $\vpsi(\sP_0)=\vpsi$ is the target SRF.  

Under a slight abuse of notation to reduce clutter and avoid multiple integrals, we denote here and in the proofs the integral of the relevant variables (understood from context) using a single operator $\int \diff{\vo}$. 

The EIF of the non-parametric model is the canonical gradient of the SRF estimand, which is characterized by the equation
\begin{equation}\label{eq:change}
\frac{\diff}{\diff t}\Big\vert_{t=0} \vpsi(\sP_t) = \int \vvarphi(\vo; \psi, \mu, \vw)s_0(\vo) \tilde{p}(\vo)\diff{\vo}
\end{equation}
This condition is also known as \emph{pathwise differentiability}. 

The following results establish that the EIF of the SRF is given by $\vvarphi(\vo; \vpsi, \mu, \vw)$.

\begin{restatable}{proposition}{thmone}
  \label{thm:one} Suppose Assumptions \ref{as:unconf} and \ref{as:pos} hold. Then the EIF of $\vpsi$ is given by
\begin{equation}
\begin{aligned}
\vvarphi(\mO; \vpsi, \mu, \vw)= \vw(\mX, A)\left(Y-\mu(\mX, A)\right) + \mu(\mX, \tilde{\mA}) - \vpsi.
\end{aligned}
\end{equation}

\end{restatable}

\begin{proof}
We need to show that \eqref{eq:change} holds. Starting on the left hand side, and using properties of logarithms, we have
\begin{equation}\label{eq:useful-score}
\begin{aligned}
  \int y s_t(\vo) \tilde{p}_t(\vo)\diff{\vo} &= \int y \frac{\diff{ \tilde{p}_t(\vo)}/\diff{t}}{\tilde{p}_t(\vo)} p_t(\vo)\diff{\vo} \\
  &= \frac{\diff}{\diff{t}} \int y \tilde{p}_t(\vo) \diff{\vo}.
\end{aligned}
\end{equation}

Observe now that (also due to logarithmic properties) we can factorize $s_{t}(\vo)$ into three parts:
\begin{equation}
\label{eq:score-factorization}
s_{t}(\vo) = s_{1,t}(y|\tilde{\va}, \vx) + s_{2,t}(\tilde{\va}|\vx) + s_{3,t}(\vx),
\end{equation}
where $s_{1,t}(y|\tilde{\va}, \vx)$, $s_{2,t}(\tilde{\va}|\vx)$, and $s_{3,t}(\vx)$ are the score functions for the outcome, shifted exposure, and covariates, respectively. Combining this decomposition with \cref{eq:useful-score}, we can rewrite the left-hand side of (\ref{eq:change}) as
\begin{equation} \label{eq:pathwise}
\begin{aligned}
     \frac{\diff}{\diff{t}} \psi(\sP_t) &= \int y s_{t}(\mathbf{o})\tilde{p}_t(\vo)\diff{\vo} \\
     &= \int y (s_{1,t}(y|\tilde{\va}, \vx) + s_{2,t}(\tilde{\va},|\vx) + s_{3,t}(\vx)) \tilde{p}_t(\vo)\diff{\vo} \\
     & \quad\quad + \int y s_{2,t}(\tilde{\va},|\vx)\tilde{p}_t(\vo)\diff{\vo} + \int y s_{3,t}(\vx) \tilde{p}_t(\vo)\diff{\vo}.
\end{aligned}
\end{equation}

For the next step of the proof, we will recursively use the two following identities. For any arbitrary function $g(\cdot)$, and for any two subsets of measurements $\vo_1$ and $\vo_2$ (e.g. $\vo_1 = (a, \vx)$ and $\vo_2 = y$), we have
\begin{equation}\label{eq:identities}
\begin{aligned}
    \int g(\vo_1)\left\{\vo_2 - \int \vo_2 \tilde{p}_t(\vo_2|\vo_1)\diff{\vo_2}\right\} \tilde{p}_t(\vo)\diff{\vo} &=^{(a)} 0 \\
    \int g(\vo_1)s_t(\vo_2|\vo_1)\tilde{p}_t(\vo)\diff{\vo} &=^{(b)} 0
\end{aligned}
\end{equation}
Using these identities and evaluating at $t=0$, we can manipulate the first term in (\ref{eq:pathwise}) as
$$
\begin{aligned}
     & \int y s_{1,t}(y|\tilde{\va}, \vx)\diff\tilde{p}_t(\vo)\diff\vo\Big\vert_{t=0} \\
     &\quad =^{(b)} \int \left\{y - \mu(\vx, \tilde{\va})\right\} s_{1,0}(y|\tilde{\va}, \vx)p(y|\tilde{\va}, \vx)\tilde{p}(\tilde{\va}|\vx)p(\vx)\diff{\vo} \\
     &\quad = \int \vw(\vx, a)\left\{y - \mu(\vx, a)\right\} s_{1,0}(y|a, \vx)p(y|a, \vx)p(a|\vx)p(\vx)\diff{\vo} \\
    &\quad =^{(a)} \int \vw(\vx, a) \\
    &\quad\hspace{1cm} \times \left\{y - \mu(\vx, a)\right\}\left\{s_{1,0}(y|a, \vx) + s_{2,0}(a|\vx) + s_{3,0}(\vx)\right\}p(\vo)\diff{\vo}\\
    &\quad =\int \vw(\vx, a)\left\{y - \mu(\vx, a)\right\} s_{0}(\vo)p(\vo)\diff{\vo}.
\end{aligned}
$$
where we used the importance sampling formula for the second equality. Now, for the second term in (\ref{eq:pathwise}), also evaluated at $t=0$, we have
$$
\begin{aligned}
    & \int y s_{2,t}(\tilde{\va}|\vx)\diff\tilde{p}_t(\vo)\diff\vo\Big\vert_{t=0} \\
    &\quad = \int  \mu(\vx, \tilde{\va}) s_{2,0}(\tilde{\va}|\vx) \tilde{p}(\tilde{\va}|\vx)p(\vx)\diff{\vo} \\
    &\quad =^{(b)} \int \left\{\mu(\vx, \tilde{\va}) - \int \mu(\vx, \tilde{\va}) \tilde{p}(\tilde{\va}|\vx)\diff{\vo} \right\} \\
    &\quad\hspace{2cm}\times \left\{ s_{1,0}(y|\tilde{\va}, \vx) + s_{2,0}(\tilde{\va}|\vx) \right\} \tilde{p}(\vo)\diff{\vo}  \\
    &\quad =^{(a)} \int \left\{\mu(\vx, \tilde{\va}) - \int \mu(\vx,\tilde{\va}) \tilde{p}(\tilde{\va}|\vx)\diff{\tilde{\va}} \right\}\\
    &\quad\hspace{2cm} \times \left\{ s_{1,0}(y|\tilde{\va}, \vx) + s_{2,0}(\tilde{\va}|\vx) + s_{3,0}(\vx) \right\} \tilde{p}(\vo)\diff{\vo}  \\
    &\quad = \int \left\{\mu(\vx, \tilde{\va}) - \int \mu(\vx, \tilde{\va}) \tilde{p}(\tilde{\va}|\vx)\diff{\tilde{\va}} \right\} s_0(\vo)\tilde{p}(\vo)\diff{\vo}  
\end{aligned}
$$
where we again center the integrand and recursively apply the identities of (\ref{eq:identities}). Finally, for the third term in (\ref{eq:pathwise}) we have
$$
\begin{aligned}
    &\int y s_{3,t}(\mathbf{x})\diff\tilde{p}_t(\vo)\diff\vo\Big\vert_{t=0} \\
    &\quad = \int ys_{3,0}(\vx) p(y|\tilde{\va}, \vx)\tilde{p}(\tilde{\va}|\vx)p(\vx)\diff{\vo} \\
    &\quad =\int \left\{\int \mu(\vx, \tilde{\va}) \tilde{p}(\tilde{\va}|\vx)\diff{\tilde{\va}}\right\} s_{3,0}(\vx) p(\vx)\diff{\vo} \\
    &\quad =^{(b)} \int \left\{\int \mu(\vx, \tilde{\va}) \tilde{p}(\tilde{\va}|\vx)\diff{\tilde{\va}} - \vpsi\right\} \\
    &\quad\hspace{2cm} \times \left\{ s_{1,0}(y|\tilde{\va}, \vx) + s_{2,0}(\tilde{\va}|\vx) + s_{3,0}(\vx) \right\} \tilde{p}(\vo)\diff{\vo} \\
    &\quad = \int \left\{\int \mu(\vx, \tilde{\va}) \tilde{p}(\tilde{\va}|\vx)\diff{\tilde{\va}} - \vpsi\right\}  s_0(\vo) \tilde{p}(\vo)\diff{\vo} \\
\end{aligned}
$$
Combining these three terms above proves the condition in (\ref{eq:change}) holds, thus completing the proof.
\end{proof}

\section{Another Example of SRF vs ERF}

The following examples show a simple case where the SRF and ERF estimands are different, thereby demonstrating why the ERF is not a useful estimate of the effect of an exposure shift. Consider a setting in which $\mu(\mX, A) = AX$ with $X \sim N(0,1)$, $A \sim N(X,1)$. Now consider the exposure shift induced by $\tilde{A} = cA$ for some $c\in\R$. Using the SRF formulation, we find that $\vpsi = \E[\mu(\mX, \tilde{A})] = \E[\E[\mX(cA)|\mX]] = c\E[\mX^2] = c$. On the other hand, the E{R}F is $\xi(a) = \E[\mu(\mX, A)|A = a] = a\E[\mX] = 0$ for all $a \in \R$. Therefore, estimators of the two estimands return two different estimates. Moreover, the ERF is identically zero for every treatment value. Thus, it cannot be used to approximate the value of the effect of the exposure shift, even when $\vpsi$ is correctly specified.

\section{Hardware/Software/Data Access}

We ran all of our experiments both in the simulation study and the application section using Pytorch \citep{paszke2019pytorch} on a high-performance computing cluster equipped with Intel 8268 "Cascade Lake" processors. Due to the relatively small size of the datasets, hardware limitations, and the large number of simulations required, we did not require the use of GPUs. Instead we found that using only CPUs run in parallel sufficed. Reproducing the full set of experiments takes approximately 12 hours with 100 parallel processes, each with 4 CPU cores and 8GB of RAM. Each process runs a different random seed for the experiment configuration. 

The code for reproducibility is provided on the submission repository along with the data sources for the simulation experiments. The datasets for these experiments were obtained from the public domain and were adapted from the GitHub repositories shared by \citet{nie2021vcnet} and \citet{bica2020estimating} as explained in the experiment details section. The data for the application was purchased from \url{https://resdac.org/}. Due to a data usage agreement and privacy concerns, manipulation of these data requires IRB approval under which the authors have completed the training and for which reason the data cannot be shared with the public.

\end{document}